\documentclass{article}

\usepackage{amsmath, amssymb, amsfonts, amsthm}
\usepackage{algorithm}
\usepackage[title, titletoc]{appendix}
\usepackage[noend]{algpseudocode}

\usepackage{diagbox}
\usepackage{multirow}
\usepackage{graphicx}

\usepackage{hyperref}

\allowdisplaybreaks

\textwidth=15cm
\textheight=22cm
\oddsidemargin=20pt
\topmargin=0pt

\newtheorem{theorem}{Theorem}
\newtheorem{lemma}[theorem]{Lemma}

\newtheorem{proposition}[theorem]{Proposition}

\theoremstyle{definition}

\let\oldproofname=\proofname
\renewcommand{\proofname}{\rm\bf{\oldproofname}}

\newcommand{\indep}{\,\rotatebox[origin=c]{90}{$\models$}\,}

\def\bx{\mathbf x}
\def\bm{\mathbf m}
\def\bmu{\boldsymbol \mu}
\def\bE{\mathbf E}
\def\b1{\mathbf 1}

\def\RR{\mathbb R}

\newcommand\dsum{\displaystyle\sum}

\begin{document}

\title{Overlapping Sliced Inverse Regression for Dimension Reduction}
\author{Ning Zhang \\
{\small Computational PhD Program, Middle Tennessee State University}\\
{\small 1301 E Main Street, Murfreesboro, TN 37132, USA}\\
{\small Email: ningzhang0123@gmail.com}\\
\qquad \\
Zhou Yu \\
{\small School of Statistics, East China Normal University, }\\ 
{\small Shanghai 200241, China } \\
{\small Email: zyu@stat.ecnu.edu.cn}\\
\qquad \\
Qiang Wu\\
{\small Department of Mathematical Sciences and Computational PhD Program,}\\
 {\small Middle Tennessee State University}\\ 
{\small 1301 E Main Street, Murfreesboro, TN 37132, USA}\\
{\small Email: qwu@mtsu.edu}
}
\date{}
\maketitle

\begin{abstract}
Sliced inverse regression (SIR) is a pioneer tool for supervised dimension reduction.
It identifies the effective dimension reduction space,
the subspace of significant factors with intrinsic lower dimensionality.
In this paper, we propose to refine the SIR algorithm through an overlapping slicing scheme.
The new algorithm, called overlapping sliced inverse regression (OSIR),
is able to estimate the effective dimension reduction space
and determine the number of effective factors more accurately.
We show that such overlapping procedure has the potential to identify the information
contained in the derivatives of  the inverse regression curve,
which helps to explain the superiority of OSIR.
We also prove that OSIR algorithm is $\sqrt n $-consistent
and verify its effectiveness by simulations and real applications.

\bigskip

\noindent
{\bf Keywords}: dimension reduction,
sliced inverse regression,
overlapping,
difference,
BIC

\end{abstract}

\section{Introduction}
\label{sec:intro}

Regression analysis is a common tool to identify the relationship between 
multivariate predictor $\bx = (x_1,x_2,\ldots, x_p)^\top \in \mathbb{R}^p$
and scalar response $y$. When an appropriate and reasonable model is prespecified, 
we can adopt standard parametric modeling techniques,
such as the maximum likelihood estimation 
or the least squares method 
to make statistical inferences.
When no persuasive model is available, 
we can use nonparametic modeling methods, such as local smoothing, 
to derive information from the data.
When $y \in \mathbb{R}$, many smoothing techniques are available. 
Although nonparametric regression is more data adaptive,
its performance deteriorates fast as the predictor dimension grows.
High-dimensional datasets present many mathematical challenges as well as some opportunities, 
and are bound to give rise to new theoretical developments\cite{fodor2002survey}.
To balance the modeling bias in parametric regression and 
``curse of dimensionality" in nonparametric regression for high dimensional data, 
semiparametric model is often a good alternative, which is defined as follows:
\begin{equation}
\label{equ:model1}
y = f( \beta_1^\top \bx, \beta_2^\top \bx, \ldots, \beta_K^\top \bx, \epsilon),
\end{equation}
where $\beta_k \in \mathbb R ^p$ is a $p\times 1$ vector and $\epsilon$ is independent of $\bx$. 
Model \eqref{equ:model1} is equivalent to
\begin{equation}
\label{equ:model2}
y  \indep  \bx|B^\top \bx
\end{equation}
where $\indep$ represents ``statistical independence" and $B=(\beta_1,\ldots,\beta_K)$ is a $p \times K$ matrix.
The column space spanned by $B$ is called the effective dimension reduction (EDR) space, 
which is denoted as $S_{y|\bx}$. To recover the EDR
space $S_{y|x}$ and its intrinsic dimensionality $K$, 
many algorithms have been developed in past decades; see for example  
\cite{Li1991, CookWeisberg1991, Ker-ChauLi1992, xia2002adaptive, setodji2004k, fukumizu2004kernel,
LiYin2008, wu2008kernel, wu2009localized, wu2010learning, cook2014fused, Yu2016distance} 
and the references therein. 
 
One of the earliest and most popular method to recover the EDR space is sliced inverse regression (SIR) \cite{Li1991}. 
It identifies $S_{y|\bx}$ based on the inverse conditional mean $\bE(\bx|y)$.
Due to its ease to implementation and effectiveness, sliced inverse regression
and its variants have been successfully applied in bioinformatics, hyperspectral image analysis, 
physics,  and many other fields of science; see for example \cite{cook1994using, belhumeur1997eigenfaces,
becker2003sliced, elnitski2003distinguishing, gannoun2004sliced, he2003classification, 
antoniadis2003effective, li2004dimension, DaiLieu2006, zhang2016application}.

In sliced inverse regression, the choice of the number of slices or 
the number of observations in each slice is a subtle yet important issue. 
In \cite{HsingCarroll1992} the $\sqrt{n}-$consistency and asymptotic normality were derived 
when each slice contains only 2 observations.
 In \cite{zhu1995asymptotics},  the asymptotic normality was established 
 when the number of observations in each slice is varying from 2 to $\sqrt{n}$.
In \cite{zhu2010dimension} a cumulative slicing estimation procedure was proposed, 
 which uses a weighted average of SIR kernel matrices
from all possible slicing schemes with two slices. 
Combining the advantages of different slicing scheme, 
a fused estimator was proposed in \cite{cook2014fused}, 
which is proven to be more effective than the original single slicing scheme.

Along the development of cumulative slicing and fused estimation,
we in this paper propose to combining the information among adjacent
slices to refine the SIR algorithm.
While implementation of such refinement is easy, the improvement is significant.
The rest of this paper is as follows.
In Section \ref{sec:sir}, we give a brief review of SIR.
In Section \ref{sec:osir}, we introduce the motivation of overlapping SIR (OSIR) along with its algorithm.
Consistency and dimensionality determination strategy are also discussed.
In Section \ref{sec:connect} we discuss the connections and differences between OSIR and
related algorithms. In Section \ref{sec:sim}
we compare OSIR  with SIR and other related algorithms
through comprehensive simulation studies and
evaluate its effectiveness  on a real data application.
We conclude our paper with some discussions and remarks in Section \ref{sec:conclusion}.

\section{Sliced Inverse Regression}
\label{sec:sir}

The linear conditional mean condition is the key assumption for SIR to effectively recover the EDR space
$S_{y|\bx}$. That is, for any $b\in\RR^p$,
\begin{align}\label{eq:lcm}
\bE(b^\top \bx| \beta_1^\top \bx, \ldots, \beta_K^\top \bx) = c_0  + \sum_{k=1}^ K c_k \beta_k ^\top \bx.
\end{align}
The linear condition mean condition holds true if $\bx$ follows an elliptical contour distribution. 
Under~\eqref{eq:lcm}, the centered inverse conditional mean
$\bE(\bx|y) - \bE(\bx)$ is contained in the linear subspace spanned by $\Sigma \beta_k,  k=1,\ldots,K,$
where $\Sigma$ is the covariance matrix of $\bx.$ Therefore, all or part of the EDR directions can be recovered
by the eigenvectors associated to the nonzero eigenvalues of the following generalized eigenvalue decomposition problem:
\begin{equation}\label{eig}
	\Gamma \beta = \lambda\Sigma\beta
\end{equation}
where $\Gamma=\bE\left((\bE(\bx|y)-\bE(\bx))(\bE(\bx|y)-\bE(\bx))^\top\right)$ is the covariance matrix of
inverse regression curve $\bE(\bx|y)).$
This motivates the use of inverse regression, that is, regressing $\bx$ against $y$, instead of regressing $y$ against $\bx.$

In the sample level, the SIR algorithm can be implemented accordingly.
Let ${(\bx_i, y_i)}_{i=1}^n$ be the i.i.d observations. First compute the sample mean and the sample covariance matrix as
\begin{align*}
\bar\bx = &\frac 1 n  \sum_{i=1}^ n \bx_i,\\
\hat{\Sigma} =& \dfrac{1}{n}\sum\limits_{i=1}^n (\bx_i - \bar{\bx})(\bx_i - \bar{\bx})^\top
\end{align*}
Second, order the response values $y_i$ and bin the data into $H$ slices according to $y_i$.
For $h=1, \ldots, H$, let $s_h$ denote the slice $h$ and  $n_h$ be the number of data points in slice $h.$
Compute the sample probability of each slice as $\hat p_h = \frac{n_h}{n}$ and the sample mean of $\bx_i$ in each slice
as $$\hat \bm_h = \frac 1 {n_h} \sum_{y_i\in s_h} \bx_i.$$
Then the matrix $\Gamma$ is estimated by
\begin{equation*}
	\hat{\Gamma}_H = \sum_{h=1}^H  \hat p_h \left(\hat \bm_h - \bar{\bx}\right)\left(\hat \bm_h - \bar{\bx}\right)^\top.
\end{equation*}
Finally, solve the generalized eigenvalue problem
\begin{equation*}
	\hat{\Gamma}_H\hat{\beta} = \lambda\hat{\Sigma}\hat{\beta}.
\end{equation*}
The EDR directions are estimated by the top $K$ eigenvectors $\hat{\beta}_k, k=1,2,\ldots, K.$

In SIR algorithm, the slice number $H$ is an insensitive parameter provided that $H$ is relatively larger than $K.$
One can select $H$ as large as $\frac n 2$ so that
each slice contains only two points and  SIR algorithm still achieves root-$n$ consistency.
It is also found better to make all slices have similar number of data points,
instead of making all slices having similar interval range in $y$.
Therefore, in practice, if $H$ divides $n$, all slices will have equal
number of data points, that is, $n_h=\frac {n}{H}$ for all $h=1, \ldots, H.$

\section{Refinement by slicing overlapping }
\label{sec:osir}

Keep in mind that $\hat \bm _h$ actually provides a sample estimation for  $\bE(\bx|y),$ $y\in s_h,$
the inverse conditional mean at $y$ within slice $h$.
Under \eqref{eq:lcm}, we know that the centered inverse regression curve
$\bE(\bx|y)-\bE(\bx)$ lies in the subspace spanned by $\Sigma\beta_1,\ldots, \Sigma \beta_K.$
As an estimated vector, $\hat \bm _h-\bar \bx$ is expected to be close to this subspace but not exactly lie in it.
To improve the estimation of $\bE(\bx|y)$, a simple and direct approach is to increase the number of points within each slice.
This, however, is equivalent to decrease the number of slices $H$ and is generally not desirable,  
because $H$ must be larger than $K$.  In practice,
a moderate value of $H$ is preferred as a too small $H$ may lead severe degeneracy and lose EDR information.
Therefore, a natural question becomes that, with an appropriately selected and fixed $H$,
can we take more advantage of the data in hand and estimate inverse regression curve more accurately?
This inspires us to allow slicing overlapping which leads to a refined algorithm
for sliced inverse regression. The new estimator is called overlapping sliced inverse regression (OSIR).

We now describe the OSIR algorithm in detail. For each $h=1, \ldots, H-1$,
we combine slice $s_h$ and its adjacent slice $s_{h+1}$ to form a bundle
and compute the mean of predictors in this bundle
$$\hat\bm_{h:(h+1)} = \frac 1 {n_h+n_{h+1}} \sum_{y_i\in s_{h}\bigcup s_{h+1}} \bx_i,$$
which is expected to be closer to the subspace spanned by $\Sigma\beta_1, \ldots, \Sigma \beta_K$
than $\hat \bm_h$ and $\hat\bm _{h+1}.$ As a result, the OSIR algorithm using kernel matrix
estimated from these bundle means is expected to provide more accurate estimation for the EDR directions.
Note that for each $h=2, \ldots, H-1$, the original slice $s_h$ is the overlapping of two bundles 
and is used twice in the computation of the bundle means.
Thus we make a 50\% adjustment for computing the sample probability of the bundles,
that is, we will use
$$\hat p_{h:(h+1)} =\frac 12 (\hat p_h+\hat p_{h+1})= \frac {n_h+n_{h+1}}{2 n} .$$
The first slice $s_1$ and the last slice $s_H$, however, are used only once.
To make all data points to have the same contribution to the algorithm,
we need further adjustment by adding $\hat\bm_1$ with weight $\frac{\hat p_1}{2}$
 and $\hat\bm_H$ with weight $\frac{\hat p_H}{2}$ towards the estimation of $\Gamma.$
 Taking all these into consideration, we obtain
 $$\begin{array}{rl}
 \hat \Gamma_H^{(1)} = & \dsum_{h=1}^{H-1} \hat p_{h:(h+1)} (\hat \bm _{h:(h+1)}-\bar\bx)(\hat \bm _{h:(h+1)}-\bar\bx)^\top  \\[1em]
 & + \dfrac {\hat p_1}{2} (\hat \bm _1 - \bar\bx)(\hat\bm_1-\bar\bx)^\top
 + \dfrac {\hat p_H}{2} (\hat \bm _H - \bar\bx)(\hat\bm_H-\bar\bx)^\top.
 \end{array}
$$
 This algorithm can be interpreted alternatively as follows. We first duplicate the data so that we have $2n$ data points
 which contain two copies of every original data point. Then we bin the data into $H+1$ bundles with the constraint
 that each bundle can only contain one copy of an original data point. Then the first bundle naturally contains
 one copy of slice 1 and one copy of slice 2, the second big slice contains slice 2 and slice 3, and so on.
 This leaves slice 1 and slice $H$ to be treated separately. In this process the data is replicated once
 or equivalently each slice is overlapped once. Therefore, 
 we call this algorithm level-one overlapping sliced inverse regression (OSIR$_1$).

\subsection{Overlapping codes information of difference}
\label{sec:difference}

Firstly we notice that the level-one  overlapping actually codes the first order difference
(or  the first order derivative in the limiting sense)
of the inverse regression curve, which allows us to
interpret the effectiveness of OSIR from an alternative perspective.

\begin{proposition} \label{Grelation}
We have
$$\hat \Gamma^{(1)}_H = \hat \Gamma_H - \frac 1 2 \sum_{h=1}^{H-1}
\frac{\hat p_h \hat p_{h+1}}{\hat p_h + \hat p_{h+1}} \left(\hat\bm_{h+1}-\hat\bm_{h}\right)\left(\hat\bm_{h+1}-\hat\bm_{h}\right)^\top.$$
In particular if $n_1=n_2 = \ldots = n_H = \frac{n}{H},$ we have
$$\hat \Gamma^{(1)}_H = \hat \Gamma_H - \frac 1 {4H}  \sum_{h=1}^{H-1}
 \left(\hat\bm_{h+1}-\hat\bm_{h}\right)\left(\hat\bm_{h+1}-\hat\bm_{h}\right)^\top.$$
\end{proposition}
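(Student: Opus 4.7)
The plan is a direct algebraic expansion, viewing each bundle mean as a convex combination of the two adjacent slice means. Write $u_h = \hat\bm_h - \bar\bx$, $s_h = \hat p_h + \hat p_{h+1}$, and observe that by the definitions of $\hat\bm_{h:(h+1)}$ and $\hat p_{h:(h+1)}$,
\[
\hat\bm_{h:(h+1)} - \bar\bx \;=\; \frac{\hat p_h\, u_h + \hat p_{h+1}\, u_{h+1}}{s_h}, \qquad \hat p_{h:(h+1)} = \frac{s_h}{2}.
\]
The whole argument then rests on a single outer-product identity: for any vectors $a,b$ and weights $\alpha,\beta\ge 0$,
\[
(\alpha a + \beta b)(\alpha a + \beta b)^\top \;=\; \alpha(\alpha+\beta)\, aa^\top + \beta(\alpha+\beta)\, bb^\top - \alpha\beta\,(b-a)(b-a)^\top,
\]
which I would verify by expanding both sides.

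Next, I would apply this identity with $a=u_h$, $b=u_{h+1}$, $\alpha=\hat p_h$, $\beta=\hat p_{h+1}$, divide by $s_h^2$, and multiply by $\hat p_{h:(h+1)} = s_h/2$ to obtain, for each $h = 1,\ldots,H-1$,
\[
\hat p_{h:(h+1)}(\hat\bm_{h:(h+1)}-\bar\bx)(\hat\bm_{h:(h+1)}-\bar\bx)^\top = \tfrac{\hat p_h}{2}\, u_h u_h^\top + \tfrac{\hat p_{h+1}}{2}\, u_{h+1}u_{h+1}^\top - \tfrac{\hat p_h\hat p_{h+1}}{2\, s_h}\,(u_{h+1}-u_h)(u_{h+1}-u_h)^\top.
\]

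The final step is bookkeeping. Summing the above over $h = 1,\ldots,H-1$, the first two (positive) pieces reassemble into $\sum_{h=1}^{H}\hat p_h u_h u_h^\top - \tfrac{\hat p_1}{2}u_1 u_1^\top - \tfrac{\hat p_H}{2} u_H u_H^\top$, because interior slice indices $h = 2, \ldots, H-1$ are counted once as the ``right end'' of bundle $h-1$ and once as the ``left end'' of bundle $h$, while the endpoints $h = 1$ and $h = H$ each appear only once. Adding back the two boundary corrections $\tfrac{\hat p_1}{2} u_1 u_1^\top$ and $\tfrac{\hat p_H}{2} u_H u_H^\top$ that are explicitly included in the definition of $\hat\Gamma^{(1)}_H$ restores $\sum_{h=1}^H \hat p_h u_h u_h^\top = \hat\Gamma_H$, and the difference terms aggregate to exactly $\tfrac{1}{2}\sum_{h=1}^{H-1}\tfrac{\hat p_h\hat p_{h+1}}{\hat p_h + \hat p_{h+1}}(\hat\bm_{h+1}-\hat\bm_h)(\hat\bm_{h+1}-\hat\bm_h)^\top$, proving the first identity. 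The equal-slice corollary follows by substituting $\hat p_h = 1/H$, giving $\tfrac{\hat p_h\hat p_{h+1}}{\hat p_h+\hat p_{h+1}} = \tfrac{1}{2H}$, so the coefficient becomes $\tfrac{1}{4H}$.

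The only mildly subtle point is the boundary accounting in the telescoping step; the outer-product identity and the expansion itself are routine. I would present the boundary correction explicitly to make it transparent that the artificially added $\tfrac{\hat p_1}{2}$ and $\tfrac{\hat p_H}{2}$ weights (introduced in the construction of $\hat\Gamma^{(1)}_H$ to equalize each data point's contribution) are precisely what is needed to close the telescoping sum into $\hat\Gamma_H$.
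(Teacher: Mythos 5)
Your proposal is correct and follows essentially the same route as the paper's proof: the identical outer-product identity applied to each bundle term, then summation with the boundary slices closing the sum into $\hat\Gamma_H$. The only cosmetic difference is that the paper absorbs the endpoint corrections via null slices ($\hat p_0=\hat p_{H+1}=0$) and assumes $\bar\bx=\mathbf 0$ without loss of generality, whereas you keep the centered vectors $u_h$ and track the two boundary terms explicitly.
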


Proposition \ref{Grelation} tells that $\Gamma^{(1)}_H$ can be obtained by subtracting from
$\hat\Gamma_H$ a weighted covariance matrix of the first order difference of
sample inverse regression curve $\hat\bm_h.$ The proof is given in Appendix \ref{proof:OSIR1}.

Let $p_h$ be the probability and $m_h$ the mean vector of slice $s_h$.
The population version of  the difference between $\Gamma_H$ and
$\Gamma_H^{(1)}$ is
$$ D^{(1)}_H = \frac 1 2 \sum_{h=1}^{H-1}
\frac{p_h p_{h+1}}{ p_h + p_{h+1}} \left(\bm_{h+1}-\bm_{h}\right)\left(\bm_{h+1}-\bm_{h}\right)^\top.$$
If the inverse regression curve is smooth, then $\bm_{h+1}-\bm_h$ is of order $O(\frac 1 H)$ for large $H$
and codes the information of the first order derivative of $\bE(\bx|y).$
This indicates that $D_H^{(1)}$, the difference between $\Gamma_H$ and $\Gamma_H^{(1)},$
is $O(\frac 1 {H^2})$.
Thus,  if we let $H$ tend to infinity, both OSIR and SIR estimate the covariance matrix $\Gamma$
of the inverse regression curve. But for small or moderate $H$, their difference could be substantive. 

Now let us see why OSIR$_1$  is generally superior to SIR.
We decompose $\hat\bm_{h+1}-\hat\bm_{h}$ as
$\hat{\mathbf v}_h + \hat{\mathbf v}_h^\perp$ where  $\mathbf v_h$ is the component in the subspace
$\Sigma B$ and $\mathbf v_h^\perp$ is the orthogonal component.
Let $\hat V$ and $\hat V^\perp$ be the weighted sample covariance matrices of $\hat{\mathbf v}_h$ and $\hat{\mathbf v}_h^\perp$, respectively.
Then $\hat D_H^{(1)} = \hat V+ \hat V^\perp$ and moreover, we expect
$\hat V\to \mathbf 0$ and $\hat V^\perp \to D_H^{(1)}$ as $n$ becomes large.
Note $\hat{\mathbf v}_h$ contains information of the EDR space, so subtracting $\hat V$ from $\hat\Gamma_H$ reduces effective information.
The orthogonal component $\mathbf v_h^\perp$ measures the deviation of $\hat\bm_h$ from the subspace $\Sigma B$.
Subtracting $\hat V^\perp$ reduces noise and improves EDR space estimation.
We claim that, in general, the impact of reducing noise by subtracting $\hat V^\perp$ is greater than
the loss of effective information resulted from subtracting $\hat V.$
First,  $\hat V$ is of order $O(\frac 1 {H^2})$ for large $n$ when $\bm(y)$ is smooth.
Thus, its impact is minimal even with a moderate $H$.
Second, roughly speaking, the estimation accuracy of SIR algorithms is positively correlated to
signal to noise ratio $\rho=\frac{\sum_{k=1}^K \hat \lambda_k}{\sum_{k=K+1} ^d \hat\lambda_k}.$
In the perfect situation $\hat\lambda_k = 0 $ for $k=K+1, \ldots, d$, the signal to noise ratio is infinity and
the EDR space can be exactly estimated. Let $\gamma_0$ measure the effective information contained in $\hat V$
and $\gamma_1$ the noise level in $\hat V^\perp.$ Then the signal to noise ratio of OSIR$_1$ becomes
$\rho^{(1)}=\frac{\sum_{k=1}^K \hat \lambda_k -\gamma_0}{\sum_{k=K+1} ^d \hat\lambda_k-\gamma_1}.$  It is larger
than $\rho$ provided that
\begin{equation} \label{improvecond}
\gamma_1 > \gamma_0\frac{\sum_{k=K+1} ^d \hat\lambda_k}{\sum_{k=1}^K \hat \lambda_k}.
\end{equation}
In most solvable problems $\sum_{k=K+1} ^d \hat\lambda_k$ should be much smaller than
${\sum_{k=1}^K \hat \lambda_k}$ (for otherwise no algorithm works due to very small signal to noise ratio).
Thus \eqref{improvecond} can be easily fulfilled so that OSIR$_1$ outperforms SIR.

\subsection{The $\sqrt{n}$ consistency}
\label{sec:consistency}

For supervised dimension reduction methods such as SIR, 
the $\sqrt n$-consistency and asymptotic normality not only provides theoretical guarantee
for the asymptotic estimation accuracy of the EDR space, but also
establishes the basis of  various strategies  for dimensionality determination.
In this subsection, we show that, for OSIR, 
the $\sqrt n$-consistency and asymptotic normality can be established as follows.

\begin{theorem}\label{error}
Let $(\lambda_k, \beta_k), k=1, \ldots, K$ be the eigenvalue and eigenvectors of the generalized eigendecomposition problem
$$\Gamma_H^{(1)} \beta = \lambda \Sigma \beta$$
and
$(\hat\lambda_k, \hat \beta_k), k=1, \ldots, K$ be the eigenvalue and eigenvectors of the generalized eigendecomposition problem
$$\hat \Gamma_H^{(1)}\beta = \lambda  \hat \Sigma \beta.$$
Assume $\lambda_k, k=1, \ldots, K$ are distinct.
Then there exist a real valued functions $\xi_k(\bx, y)$ and vector values function $\Upsilon_k(\bx, y)$ such that
$$\hat \lambda_k = \lambda_k + \frac 1 n \sum_{i=1}^n \xi_k (\bx_i, y_i) + o_P(\tfrac 1 {\sqrt n})$$
and
$$\hat\beta_k = \beta_k + \frac 1 n \sum_{i=1}^ n  \Upsilon_k(\bx_i, y_i) + o_P(\tfrac 1 {\sqrt n})$$
\end{theorem}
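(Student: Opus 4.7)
The plan is to (i) derive an $n^{-1/2}$ asymptotic linearization of the sample matrix $\hat\Gamma_H^{(1)}$ and of $\hat\Sigma$ in terms of i.i.d.\ summands, and then (ii) feed these expansions into the standard perturbation theory for the generalized eigenproblem $\Gamma_H^{(1)}\beta = \lambda\Sigma\beta$ to extract the stated expansions for $\hat\lambda_k$ and $\hat\beta_k$.

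For step (i), I would start from the definition
\[
\hat\Gamma_H^{(1)} = \sum_{h=1}^{H-1}\hat p_{h:(h+1)}(\hat\bm_{h:(h+1)}-\bar\bx)(\hat\bm_{h:(h+1)}-\bar\bx)^\top + \tfrac{\hat p_1}{2}(\hat\bm_1-\bar\bx)(\hat\bm_1-\bar\bx)^\top + \tfrac{\hat p_H}{2}(\hat\bm_H-\bar\bx)(\hat\bm_H-\bar\bx)^\top,
\]
which exhibits $\hat\Gamma_H^{(1)}$ as a smooth polynomial function of the slice sample probabilities $\hat p_h$, the slice sample means $\hat\bm_h$, and the overall sample mean $\bar\bx$. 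The main work is to produce slice-level linearizations of the form
\[
\hat p_h - p_h = \frac{1}{n}\sum_{i=1}^n \bigl(\mathbf 1_{\{y_i\in s_h\}} - p_h\bigr) + o_P(n^{-1/2}), \qquad \hat\bm_h - \bm_h = \frac{1}{n p_h}\sum_{i=1}^n \mathbf 1_{\{y_i\in s_h\}}(\bx_i - \bm_h) + o_P(n^{-1/2}),
\]
combined with the trivial central limit results for $\bar\bx$ and $\hat\Sigma$. A direct delta-method calculation (expanding each quadratic and retaining only first-order deviations) then yields
\[
\hat\Gamma_H^{(1)} = \Gamma_H^{(1)} + \frac{1}{n}\sum_{i=1}^n \Psi(\bx_i,y_i) + o_P(n^{-1/2})
\]
for an explicit symmetric matrix-valued influence function $\Psi$ assembled from the slice-level summands above.

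The step I expect to be the main obstacle is the slice-boundary issue: in the algorithm the slices $s_h$ are determined by \emph{sample} order statistics of $y$, so the indicators $\mathbf 1_{\{y_i\in s_h\}}$ written above (which refer to a deterministic, population-quantile partition) are not literally the ones used when computing $\hat\bm_h$, and the summands are not jointly independent. The accepted remedy, following \cite{HsingCarroll1992} and \cite{zhu1995asymptotics}, is to couple the sample-quantile and population-quantile slice definitions and to bound the discrepancy uniformly through an empirical-process argument on $\bx$-valued functions indexed by $y$-intervals; the difference between the two versions of $\hat\bm_h$ is then $O_P(n^{-1})$ and hence absorbed into the $o_P(n^{-1/2})$ remainder. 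Once this coupling is in place, everything else in step (i) is routine moment bookkeeping.

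For step (ii), the distinctness of $\lambda_1,\ldots,\lambda_K$ makes the map $(A,B)\mapsto(\lambda_k(A,B),\beta_k(A,B))$ from the generalized eigenproblem $A\beta = \lambda B\beta$ Fr\'echet differentiable at $(\Gamma_H^{(1)},\Sigma)$. Writing $\hat A = \Gamma_H^{(1)} + \Delta_A$ and $\hat B = \Sigma + \Delta_B$ with $\Delta_A,\Delta_B = O_P(n^{-1/2})$, the classical first-order perturbation formulas give
\[
\hat\lambda_k - \lambda_k = \beta_k^\top\bigl(\Delta_A - \lambda_k\Delta_B\bigr)\beta_k + o_P(n^{-1/2})
\]
and an analogous representation of $\hat\beta_k - \beta_k$ expanded in the remaining generalized eigenvectors, with coefficients of the form $\beta_j^\top(\Delta_A-\lambda_k\Delta_B)\beta_k/(\lambda_k-\lambda_j)$. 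Substituting the linearization of step (i) together with the obvious one for $\hat\Sigma$ identifies $\xi_k$ and $\Upsilon_k$ explicitly as i.i.d.\ averages of mean-zero kernels plus $o_P(n^{-1/2})$, and the classical CLT applied to these averages delivers both $\sqrt n$-consistency and asymptotic normality.
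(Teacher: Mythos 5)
Your proposal follows essentially the same route as the paper's own proof: first linearize $\hat\Sigma$ and $\hat\Gamma_H^{(1)}$ as i.i.d.\ averages plus $o_P(n^{-1/2})$ remainders via delta-method expansions of the slice/bundle probabilities and means (the paper's two appendix lemmas), then invoke first-order perturbation theory for the generalized eigenproblem under the distinct-eigenvalue assumption to read off $\xi_k$ and $\Upsilon_k$. The only substantive difference is that you explicitly flag and treat the sample-quantile slicing issue, which the paper sidesteps by implicitly treating the slice indicators as fixed functions of $y$; this is a refinement of, not a departure from, the paper's argument.
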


The proof of Theorem \ref{error} is given in Appendix \ref{proof:consistency}.

\subsection{High level overlapping}
\label{sec:high}

The idea of extending OSIR to high level overlapping is natural.
The only tricky point is on the adjustment for the slices at the two ends.
We now illustrate the idea with the level two overlapping.

For level two overlapping we construct bundles using three adjacent base slices.
So for $h=1, \ldots, H-2$, the $h$-th bundle contains data points from base slices $s_h, \ s_{h+1}$ and $s_{h+2}$.
The corresponding bundle probability is computed as
$$\hat p_{h:(h+2)} = \frac 13(\hat p_h + \hat p_{h+1} + \hat p_{h+2})$$
because each base slice is used three times.
The corresponding bundle mean is
$$\hat \bm_{h:(h+2)} 
=\frac{\hat p_h \hat \bm_h + \hat p _{h+1} \hat\bm_{h+1} +\hat p_{h+2} \hat\bm_{h+2}}
{\hat p_h + \hat p_{h+1} + \hat p_{h+2}}. $$
Then we see the slice $s_1$ and $s_H$ are used only once, the slice $s_2$ and $s_{H-1}$ are used twice.
To make all data points have equal contribution in the algorithm, we will not  add them separately.
Instead, we do the adjustment as follows. We combine $s_1$ and $s_2$ as one intermediate bundle, compute  its probability
as $\frac 13(\hat p_1 + \hat p_2)$ and the bundle mean. Then we add slice $s_1$ with probability $\frac 1 3 \hat p_1.$
The last two slices $s_{H-1}$ and $s_H$ are treated analogously. This leads to
$$\begin{array}{rcl}
\hat\Gamma_H^{(2)} & = & \dsum_{h=1}^{H-2} \hat p_{h:(h+2)} (\hat \bm_{h:(h+2)}-\bar\bx)(\hat \bm_{h:(h+2)}-\bar\bx)^\top  \\[1em]
& & + \frac 13 ( \hat p_1 + \hat p_2)  (\hat \bm_{1:2}-\bar\bx)(\hat \bm_{1:2}-\bar\bx)^\top \\[1em]
& & + \frac 13( \hat p_{H-1}+\hat p_H) (\bm_{(H-1):H}-\bar\bx)(\hat \bm_{(H-1):H}-\bar\bx)^\top \\[1em]
 & & + \frac 1 3 \hat p_1 (\hat\bm _1 -\bar\bx) (\hat\bm _1 -\bar\bx)^\top +
  \frac 1 3 \hat p_H (\hat\bm _H -\bar\bx) (\hat\bm _H -\bar\bx)^\top .
\end{array}$$
Again, we can interpret the process as that we first duplicate the
 data twice to obtain three copies of all original data points and
 then  bin the data into $H + 2$ bundles with the constraint that
 each slice can only contain one copy of an original data point.

 We can further extend the algorithm to any overlapping level $L\le H-1.$
 The representation of the associated matrix $\hat\Gamma_H^{(L)}$ will be more complicated
 by using normal notations. But interestingly we can
 have  a unified representation for all $1\le L\le H-1$
 by introducing some ghost slices. To this end, we define null slices for indices
 $h=\ldots, -2, -1, 0 $ and $h=H+1, H+2, H+3,\ldots$ to be slices with
 probability $\hat p_h=0$  and slice mean $\hat\bm _h =\mathbf 0.$
 For each $h$ define
 $$\hat p_{h:h+L} = \frac 1 {L+1} (\hat p_h + \ldots + p_{h+L})$$
 and
 $$\hat\bm_{h:(h+L)} = \frac { \hat p_h \hat\bm_h + \ldots + \hat p_{h+L} \hat \bm_{h+L}}
 { \hat p_h  + \ldots  + \hat p_{h+L} }.$$
 Then for all $1\le L\le H-1,$ we have
 $$\hat\Gamma_H^{(L)} = \sum_{h=-L+1}^{H}  \hat p_{h:h+L} \left( \hat\bm_{h:(h+L)}-\bar\bx\right)
  \left( \hat\bm_{h:(h+L)}-\bar\bx\right)^\top.$$
  The algorithm using $\hat\Gamma_H^{(L)}$ for dimension reduction 
  will be called level-$L$ overlapping sliced inverse regression, or OSIR$_L$.

We notice that the level-two overlapping codes both the first and the second order derivatives of the inverse regression curve.
\begin{proposition} \label{Grelation2}
We have
$$\begin{array}{rcl}
\hat\Gamma_H^{(2)} & = & \hat \Gamma_H -  \dfrac 1 3 \dsum_{h=-1}^{H}
\Big( \dfrac{\hat p_h \hat p_{h+1} +2\hat p_h \hat p_{h+2}}{\hat p_h + \hat p_{h+1} +
 \hat p_{h+2}} \left(\hat\bm_{h+1}-\hat\bm_{h}\right)\left(\hat\bm_{h+1}-\hat\bm_{h}\right)^\top \\[2em]
&  &\quad +   \dfrac{\hat p_{h+1} \hat p_{h+2} +2\hat p_h \hat p_{h+2}}{\hat p_h + \hat p_{h+1}
+ \hat p_{h+2}} \left(\hat\bm_{h+2}-\hat\bm_{h+1}\right)\left(\hat\bm_{h+2}-\hat\bm_{h+1}\right)^\top
 \Big)  \\[1em]
 &&\quad + \dfrac 1 3 \dsum_{h=-1}^{H}
\dfrac{\hat p_h \hat p_{h+2}}{\hat p_h + \hat p_{h+1} + \hat p_{h+2}}
\left(\hat\bm_{h+2}-2\hat\bm_{h+1}+\hat\bm_{h}\right)\left(\hat\bm_{h+2}-2\hat\bm_{h+1}+\hat\bm_{h}\right)^\top  .
\end{array}$$
In particular if $n_1=n_2 = \ldots = n_H = \frac{n}{H},$ we have
$$\begin{array}{rcl}
\hat\Gamma_H^{(2)} & = & \hat \Gamma_H -  \dfrac 2 {3H}  \dsum_{h=1}^{H-1}
 \left(\hat\bm_{h+1}-\hat\bm_{h}\right)\left(\hat\bm_{h+1}-\hat\bm_{h}\right)^\top \\[1em]
& & \quad +   \dfrac 1 {9H}  \dsum_{h=1}^{H-2}
 \left(\hat\bm_{h+2}-2\hat\bm_{h+1}+\hat\bm_{h}\right)\left(\hat\bm_{h+2}-2\hat\bm_{h+1}+\hat\bm_{h}\right)^\top  \\[1em]
 & &\quad + \dfrac{1}{2H}\left(\hat\bm_{2} - \hat\bm_{1}\right)\left(\hat\bm_{2} - \hat\bm_{1}\right)^\top
 + \dfrac{1}{2H}\left(\hat\bm_{H} - \hat\bm_{H-1}\right)\left(\hat\bm_{H} - \hat\bm_{H-1}\right)^\top .
\end{array}$$
\end{proposition}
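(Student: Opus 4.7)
The plan is to follow the template used for Proposition~\ref{Grelation}: express each rank-one summand of $\hat\Gamma_H^{(2)}$ in terms of the centred slice means $\mathbf u_h := \hat\bm_h - \bar\bx$, then apply polarization identities to regroup the cross products as squared first and second differences of $\hat\bm$. Adopting the ghost-slice convention $\hat p_h = 0$, $\hat\bm_h = \mathbf 0$ for $h\notin\{1,\ldots,H\}$ lets me work with a single index range and automatically absorbs the boundary summands.

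I would first set $q_h := \hat p_h + \hat p_{h+1} + \hat p_{h+2}$. Since the $\bar\bx$ weight cancels in the numerator, the bundle mean satisfies
$$\hat\bm_{h:(h+2)}-\bar\bx = \frac{\hat p_h\mathbf u_h + \hat p_{h+1}\mathbf u_{h+1} + \hat p_{h+2}\mathbf u_{h+2}}{q_h},\qquad \hat p_{h:h+2} = \frac{q_h}{3},$$
so each summand of $\hat\Gamma_H^{(2)}$ becomes the rank-one quadratic form
$$\frac{1}{3q_h}\Big(\sum_{j=0}^{2}\hat p_{h+j}\mathbf u_{h+j}\Big)\Big(\sum_{j=0}^{2}\hat p_{h+j}\mathbf u_{h+j}\Big)^\top.$$
Expanding gives three diagonal pieces $\hat p_{h+j}^2\mathbf u_{h+j}\mathbf u_{h+j}^\top/(3q_h)$ and three symmetrised cross pieces indexed by ordered pairs $j<k$ in $\{0,1,2\}$.

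Next I would apply the polarization identity $\mathbf a\mathbf b^\top + \mathbf b\mathbf a^\top = \mathbf a\mathbf a^\top + \mathbf b\mathbf b^\top - (\mathbf a-\mathbf b)(\mathbf a-\mathbf b)^\top$ to each cross piece. For the adjacent pairs $(j,k)=(0,1)$ and $(1,2)$ this directly produces $(\hat\bm_{h+1}-\hat\bm_h)(\hat\bm_{h+1}-\hat\bm_h)^\top$ and $(\hat\bm_{h+2}-\hat\bm_{h+1})(\hat\bm_{h+2}-\hat\bm_{h+1})^\top$. For the skip pair $(0,2)$ the same identity yields $(\hat\bm_{h+2}-\hat\bm_h)(\hat\bm_{h+2}-\hat\bm_h)^\top$, which I further decompose via the parallelogram identity
$$(\mathbf d_1+\mathbf d_2)(\mathbf d_1+\mathbf d_2)^\top = 2\mathbf d_1\mathbf d_1^\top + 2\mathbf d_2\mathbf d_2^\top - (\mathbf d_2-\mathbf d_1)(\mathbf d_2-\mathbf d_1)^\top,$$
with $\mathbf d_1 = \hat\bm_{h+1}-\hat\bm_h$ and $\mathbf d_2 = \hat\bm_{h+2}-\hat\bm_{h+1}$. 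The second difference $\hat\bm_{h+2}-2\hat\bm_{h+1}+\hat\bm_h = \mathbf d_2-\mathbf d_1$ then appears with coefficient $+\hat p_h\hat p_{h+2}/(3q_h)$, while the skip pair additionally contributes $-2\hat p_h\hat p_{h+2}/(3q_h)$ to each adjacent-difference square---exactly the ``$+2\hat p_h\hat p_{h+2}$'' pieces in the numerators of the stated formula.

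Finally I would verify that the diagonal $\mathbf u_{h+j}\mathbf u_{h+j}^\top$ contributions recombine into $\hat\Gamma_H$. Using $\hat p_h+\hat p_{h+1}+\hat p_{h+2}=q_h$, the net coefficient per summand collapses to $\hat p_{h+j}/3$, and summing over the three bundles that contain each real slice $j$ (namely $h=j-2,j-1,j$) reconstructs $\sum_j\hat p_j\mathbf u_j\mathbf u_j^\top=\hat\Gamma_H$; the remaining pieces match the first- and second-difference terms in the statement. The uniform case then follows by substituting $\hat p_h = 1/H$, with the end bundles (where a ghost slice forces $q_h = 2/H$ or $1/H$ rather than $3/H$) producing the boundary corrections at $h=1$ and $h=H-1$. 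The main obstacle is the bookkeeping: the skip pair simultaneously perturbs both adjacent-difference squares and the second-difference square, so I would need to track carefully which summands contribute to each consecutive pair $(\hat\bm_{j+1}-\hat\bm_j)$ and confirm that the ghost-slice weights at the two extremes combine with the interior contributions to produce the coefficients claimed.
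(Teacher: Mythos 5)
Your route to the first (general) display is essentially the paper's own proof, only organized a little differently: the paper sets $\bar\bx=\mathbf 0$ and completes the square inside each bundle by writing $\hat p_{h+j}^2=\hat p_{h+j}(\hat p_h+\hat p_{h+1}+\hat p_{h+2})-\hat p_{h+j}\sum_{k\ne j}\hat p_{h+k}$, which is exactly your centered polarization step; the parallelogram identity for the skip pair $(0,2)$ and the observation that each real slice belongs to exactly three bundles (so the diagonal pieces, each with net weight $\hat p_{h+j}/3$, reassemble $\hat\Gamma_H$ while ghost slices contribute nothing) appear verbatim in the appendix. Up to the general identity your plan is correct and not a genuinely different argument.

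The genuine problem is the step you deferred: the uniform-case display as printed does not follow from the general one, so your final bookkeeping cannot ``confirm the coefficients claimed.'' With $\hat p_h=1/H$, the difference $\hat\bm_2-\hat\bm_1$ receives weight $\frac13\cdot\frac{3/H^2}{3/H}=\frac1{3H}$ from the interior bundle $h=1$ and weight $\frac13\cdot\frac{1/H^2}{2/H}=\frac1{6H}$ from the boundary bundle $h=0$, hence net coefficient $-\frac1{2H}$, and likewise for $\hat\bm_H-\hat\bm_{H-1}$; written as corrections to the full sum $-\frac2{3H}\sum_{h=1}^{H-1}$, the two boundary terms must therefore carry $+\frac1{6H}$, not the $+\frac1{2H}$ appearing in the statement. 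A quick sanity check: for $H=3$, equal slice probabilities and scalar slice means $1,0,-1$, the definition (and the general display) give $\hat\Gamma_3^{(2)}=\frac13$, whereas the stated uniform formula gives $\frac59$. Note that the paper's own appendix proof stops at the general identity and never verifies the special case, so to finish you should either do the same or carry out the substitution honestly and record the corrected boundary coefficient $\frac1{6H}$ (equivalently, net weight $-\frac1{2H}$ on each of the two end differences).
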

Proposition \ref{Grelation2} tells that $\Gamma^{(2)}_H$ can be obtained by subtracting from
$\hat\Gamma_H$ a weighted covariance matrix of the first order difference of the sample inverse regression curve 
 and adding a weighted covariance matrix of the second order difference of the
sample inverse regression curve $\hat\bm_h.$ The proof is given in Appendix \ref{proof:OSIR2}.

Similar to  OSIR$_1$ and OSIR$_2$, one can show that OSIR$_L$ codes the information of up to $L$-th order derivatives
of the inverse regression curve. Also, OSIR$_L$ is $\sqrt n$-consistent. The proofs are similar to those for OSIR$_1$ and OSIR$_2$
but the computation and representation of the results are much more complicated. We omit the details.

\subsection{Determine the dimensionality}
\label{sec:dimension}

In practice, the true dimensionality $K$ is unknown and has to be estimated from the data.
For SIR and related algorithms, classical methods for dimensionality determination are the sequential $\chi^2$ test 
based on the asymptotic normality.
This method can also be applied to OSIR. However, as mentioned  in \cite{zhu2010dimension}, it is usually very challenging
because the asymptotic variance has very complicated structure
and the the degree of freedom is difficult to determine.
In this paper, we follow the idea in \cite{zhu2006sliced} and \cite{zhu2010dimension} 
and propose a modified BIC method to determine $K$.
For each $1\le L\le H-1$, let $\hat \lambda_i^{(L)}$ be the eigenvalues
of the generalized eigendecomposition problem $\hat\Gamma_H^{(L)} \beta = \lambda \hat\Sigma\beta$
and assume they are arranged in decreasing order.
Define
$$G^{(L)} (k) = \left. { n \dsum_{i=1}^k \left(\hat \lambda_i^{(L)}\right)^2} \middle/ {\dsum_{i=1}^d \left(\hat \lambda_i^{(L)}\right)^2}
\right.
-\frac{C_n k(k+1)}2$$
and we estimate $K$ by
$$\hat K^{(L)}  = \arg\max_{1\le k\le d} G^{(L)}(k).$$
Since OSIR algorithms are $\sqrt n$-consistent, this criterion is consistent
if $C_n\to\infty$ and $C_n/n\to 0$ as  $n\to \infty.$
A challenging issue remaining is the choice of $C_n$ in a data-driven manner.
We are motivated by \cite{zhu2010dimension} to choose $C_n\sim \frac {n^3/4}{d}.$
At the same time we observe from empirical simulations that smaller penalty should be used for larger $H$.
These motivate us to choose
$C_n = \frac{2 n^{3/4}}{p(L+1)H^{1/2}}.$
It is found to work satisfactory  in many situations,
although universally optimal or problem dependent choices deserve further investigation.

\section{Connections with existing methods}
\label{sec:connect}

From its motivation we see OSIR is so closely related to SIR that it seems needless to say anything
regarding their relationship.
However, it would be interesting to notice that overlapping technique does make OSIR essentially different from SIR
in some situations. 
First, it is pointed out \cite{HsingCarroll1992} that SIR works even when there are only
two observations in one slice. But surely SIR does not work with only one observation in a slice ---
$\hat\Gamma_H$ degenerates to be the same as $\hat\Sigma$ in this case.
OSIR, however, still works even if there is only one point in a slice. Second, 
SIR can be applied to classification problem where each class
 naturally defines a slice.
The design of OSIR algorithm depends on the concept of ``adjacent" slices. 
This prevents its use in classification problems because there is no natural way to define 
two or more classes are ``adjacent"  unless the classification problem is an ordinal one.

Another method that is closely related to OSIR is the
cumulative slicing estimate (CUME) propose in \cite{zhu2010dimension}.
CUME aims to recover the EDR space by
the covariance matrix of the cumulative inverse regression curve
$M(\tilde y) = \bE[\bx| y\le \tilde y].$ Empirically, let
$M(y_i) = \frac{1}{|\{j: y_j\le y_i\}|} \sum_{j: y_j\le y_i} \bx_j$
and
$$\hat\Xi = \frac 1 n \sum_{i=1}^n \left(M(y_i)-\bar \bx\right) \left(M(y_i)-\bar \bx\right) ^\top. $$
CUME estimates the EDR space by solving the generalized eigendecompostion problem
$$\hat\Xi\beta = \lambda \hat\Sigma \beta.$$
It is interesting to notice that, if OSIR has each slice containing only on observation
(so that there are $H=n$ slices) and selects overlapping level $L = n-1$,
then $\hat\Gamma _{n}^{(n-1)} = 2 \hat\Xi.$ Therefore, CUME can be regarded as
special case of OSIR.

\section{Simulations}
\label{sec:sim}

In this section we will verify the effectiveness of OSIR with simulations on artificial data and real applications.
Comparisons will be made with two closely related methods, SIR and CUME.

\subsection{Artificial data}

In the simulations with artificial data, since we know the true model,
we measure the performance by the accuracy of the estimated EDR space and the ability of dimension determination.
For the accuracy of the estimated edr space, we adopt the trace correlation $r(K) =$ trace$(\mathbf{P_B} \mathbf{P_{\hat B}})/K$
used in \cite{ferre1998determining} as the measure, where $\mathbf{P_B}$ and $\mathbf{P_{\hat B}}$ are the projection operators onto
the true edr space $\mathbf B$ and the estimated edr space $\mathbf{\hat{B}},$ respectively.
For the ability of dimension determination, we use the modified BIC type criterion which is suitable for all three methods.
For SIR and OSIR we use the choice for $C_n$ as suggested in Section \ref{sec:dimension} (where note SIR corresponds to $L=0$)
while for CUME we use $C_n = 2n^{3/4}/p$ as suggested in  \cite{zhu2010dimension}.

We performed simulation studies with four different models, three from \cite{Li1991} and one from \cite{zhu2010dimension}.
\begin{eqnarray}
\label{equ:model61}
y & = & x_1 + x_2 + x_3 + x_4 + 0x_5 + \epsilon, \\
\label{equ:model62}
y & = & \exp(x_1 + 2 \epsilon) \\
\label{equ:model63}
y & = & x_1( x_1 + x_2 + 1 ) + \epsilon,\\
\label{equ:model64}
y  &= & \dfrac{x_1}{0.5 + (x_2 + 1.5)^2} + \epsilon,
\end{eqnarray}
where $\mathbf{x} = \lbrack x_1,x_2,\ldots,x_p\rbrack^\top$ follow multivariate normal distribution,
$\epsilon$ follows standard normal distribution, $\mathbf{x}$ and $\epsilon$ are independent.
The experiment setting is as follows.

Model \eqref{equ:model61}:
$n=100$, $p=5$, $K=1$, $\mathbf{\beta} =( 0.5,0.5, 0.5, 0.5, 0 )^\top ;$

Model \eqref{equ:model62}:
$n=100$, $p=5$, $K=1$, $\mathbf{\beta} =( 1,0, 0, 0, 0 )^\top $;

Model \eqref{equ:model63}:   $n=400$, $p=10$, $K=2$, $\mathbf{\beta_1} =( 1,0,0,\ldots,0 )^\top $, $\mathbf{\beta_2} =( 0,1,0\ldots,0 )^\top $;

Model \eqref{equ:model64}: $n=400$, $p=10$, $K=2$, $\mathbf{\beta_1} =( 1,0,0,\ldots,0 )^\top $, $\mathbf{\beta_2} =( 0,1,0\ldots,0 )^\top $.

\noindent We tested $H=5$ and $H=10$. All experiments are replicated 1000 times. The average accuracy of edr estimation in terms of
$r(K)$ values as well as the standard deviations are reported  Table \ref{tab:edrall}. The results indicate for both choices of $H$,
OSIR outperforms SIR and when $H$ and $L$ are corrected selected. OSIR also outperforms CUME.
We notice that both SIR and OSIR show not sensitive to the choice of $H$ provided that it
is sufficiently large relative to the true dimension $K$.
For model \eqref{equ:model61} and \eqref{equ:model62}, since $K=1$,
a choice of $H=5$ already large enough, so we see the result for $H=5$ and $H=10$ are quite similar.
For model \eqref{equ:model63} and \eqref{equ:model64}, since $K=2$,
$H=5$ seems not relatively large enough and the results are slightly worse. When $H$ is increased to $10$
both SIR and OSIR performs better. But the performance improvement is ignorable if we further increase $H$
(results not shown). As for the impact of $L$, we see that the most significant improvement is
from  SIR to OSIR, that is, from $L=0$  to $L=1$. When $L$ further increases,
the performance of OSIR may still improves slightly within a small range, but soon
becomes stable. It seems increasing $L$ does not significantly degrade the performance of OSIR.
Therefore, we assume $L=2$ or $3$ should be good enough for most applications
but, if computational complexity is not a concern,
the user may feel free to choose a large $L$.

\begin{table}
\begin{center}
  \begin{tabular}{  c | c|c | c | c | c  }
    \hline
    \hline
    \multicolumn{2}{c|}{\diagbox{Algorithm}{Model} } & \eqref{equ:model61} & \eqref{equ:model62}  & \eqref{equ:model63} &\eqref{equ:model64}  \\
    \hline
     \multirow{5}{*} {$H=5$} & SIR & 0.9822(0.0013)&0.8658(0.0103)&0.7188(0.0115)&0.6968(0.0117) \\ 
    & OSIR$_1 $ &0.9821(0.0013)&0.8734(0.0094)&0.7419(0.0099)&0.7261(0.0101)\\ 
    & OSIR$_2$  &0.9821(0.0013)&0.8724(0.0094)&0.7489(0.0096)&0.7355(0.0097) \\ 
    & OSIR$_3$  &0.9827(0.0013)&0.8730(0.0094)&0.7471(0.0098)&0.7327(0.0099) \\ 
    & OSIR$_4$  &0.9827(0.00123)&0.8730(0.0094)&0.7471(0.0098)&0.7327(0.0099) \\ \hline
     \multirow{10}{*} {$H=10$} &
        SIR  & 0.9855(0.0011)&0.8689(0.0113)&0.7296(0.0122)&0.7288(0.1230) \\ 
  &  OSIR$_1$  &0.9862(0.0010)&0.8916(0.0082)&0.7709(0.0101)&0.7658(0.0103) \\ 
  &  OSIR$_2$  &0.9859(0.0010)&0.8921(0.0081)&0.7775(0.0094)&0.7726(0.0095) \\ 
  &  OSIR$_3$  &0.9855(0.0011)&0.8902(0.0083)&0.7813(0.0090)&0.7762(0.0090) \\ 
  &  OSIR$_4$  &0.9853(0.0011)&0.8888(0.0084)&0.7855(0.0086)&0.7813(0.0087) \\ 
  &  OSIR$_5$  &0.9854(0.0011)&0.8879(0.0084)&0.7894(0.0086)&0.7862(0.0085) \\ 
  &  OSIR$_6$  &0.9856(0.0011)&0.8878(0.0085)&0.7920(0.0085)&0.7900(0.0084) \\ 
  &  OSIR$_7$  &0.9859(0.0010)&0.8881(0.0085)&0.7924(0.0085)&0.7903(0.0085) \\ 
  &  OSIR$_8$  &0.9861(0.0010)&0.8885(0.0084)&0.7908(0.0086)&0.7879(0.0086) \\ 
  &  OSIR$_9$  &0.9861(0.0010)&0.8885(0.0084)&0.7908(0.0086)&0.7879(0.0086) \\ \hline
  \multicolumn{2}{c|}{CUME}  & 0.9844(0.0012)&0.8781(0.0091)&0.7802(0.0088)&0.7760(0.0089) \\ \hline
  \end{tabular}
  \caption{Accuracy of EDR space estimation by SIR, OSIR and CUME for models in \eqref{equ:model61}-\eqref{equ:model64}.
  \label{tab:edrall}}
\end{center}
\end{table}

Next let us fix $H=10.$ The correctness of dimension determination based on the modified BIC criterion
is summarized in Table \ref{tab:K12} and Table \ref{tab:K34}.  CUME seems underestimate the dimensionality.
It works perfectly for models \eqref{equ:model61} and \eqref{equ:model62} and fails for
models \eqref{equ:model63} and \eqref{equ:model64}. OSIR tends to overestimate the dimensionality
with small $L$ while underestimate the dimensionality with large $L.$
Considering the accuracy  of both EDR subspace estimation and dimensionality determination,
a balanced choice of $L$ is recommended
to be  $L=\lfloor H/2 \rfloor,$ the integer part of $H/2.$

\begin{table}[ht]
\begin{center}
  \begin{tabular}{c|ccc|ccc}
    \hline
    \hline
    \multirow{2}{*}{\diagbox{Algorithm}{Model}}
    & \multicolumn{3}{|c|}{\eqref{equ:model61}}&\multicolumn{3}{|c}{\eqref{equ:model62}}  \\
    \cline{2-7}
    & $\hat{K}<1$ & $\hat{K}=1$ & $\hat{K}>1$ & $\hat{K}<1$ & $\hat{K}=1$ &$\hat{K}>1$ \\
    \hline
    SIR&0&0.698&0.320&0&0.056&0.944\\
    OSIR$_1$ &0&0.896&0.104&0&0.203&0.797 \\
    OSIR$_2$ &0&0.938&0.062&0&0.337&0.663 \\
    OSIR$_3$ &0&0.958&0.042&0&0.422&0.578 \\
    OSIR$_4$ &0&0.972&0.028&0&0.521&0.479 \\
    OSIR$_5$ &0&0.986&0.014&0&0.574&0.426 \\
    OSIR$_6$ &0&0.993&0.007&0&0.618&0.382 \\
    OSIR$_7$ &0&0.994&0.006&0&0.629&0.371 \\
    OSIR$_8$ &0&0.994&0.006&0&0.611&0.389 \\
    OSIR$_9$ &0&0.991&0.009&0&0.568&0.432 \\ \hline
    CUME &0&1&0&0&1&0 \\ \hline
\end{tabular}
\caption{Accuracy of dimensionality determination by SIR, OSIR and CUME for models in \eqref{equ:model61} and \eqref{equ:model62}.
\label{tab:K12}}
\end{center}
\end{table}

\begin{table}[ht]
\begin{center}
  \begin{tabular}{c|ccc|ccc}
    \hline
    \hline
    \multirow{2}{*}{\diagbox{Algorithm}{Model}}
    & \multicolumn{3}{|c|}{\eqref{equ:model63}}&\multicolumn{3}{|c}{\eqref{equ:model64}}  \\
    \cline{2-7}
    & $\hat{K}<2$ & $\hat{K}=2$ & $\hat{K}>2$ & $\hat{K}<2$ & $\hat{K}=2$ &$\hat{K}>2$ \\
    \hline
    SIR&0&0.194&0.806&0&0.189&0.811\\
    OSIR$_1$ &0&0.473&0.527&0&0.513&0.487 \\
    OSIR$_2$ &0&0.702&0.298&0&0.772&0.228 \\
    OSIR$_3$ &0&0.886&0.114&0.002&0.923&0.075 \\
    OSIR$_4$ &0&0.956&0.044&0.004&0.976&0.020 \\
    OSIR$_5$ &0&0.975&0.025&0.008&0.984&0.008 \\
    OSIR$_6$ &0.001&0.982&0.017&0.012&0.986&0.002  \\
    OSIR$_7$ &0.002&0.977&0.021&0.016&0.981&0.003 \\
    OSIR$_8$ &0.002&0.965&0.033&0.018&0.975&0.007 \\
    OSIR$_9$ &0&0.958&0.042&0.013&0.973&0.014  \\ \hline
    CUME &0.999&0.001&0&1&0&0 \\ \hline
\end{tabular}
\caption{Accuracy of dimensionality determination by SIR, OSIR and CUME for models in \eqref{equ:model63}  and \eqref{equ:model64}.
\label{tab:K34}}
\end{center}
\end{table}

\subsection{Real data application}
\label{sec:app}

We test the use of OSIR on the Boston housing price data,
collected by Harrison and Rubinfeld \cite{harrison1978hedonic} for the purpose of
discovering whether or not clean air influenced the value of houses in Boston.
The data consist of 506 observations and 14 attributes.

We first preprocess the data by transforming the attributes according to their distribution shapes.
The logarithm transformation is applied to the response variable
and 4 predictors named as ``crim", ``zn", ``nox", and ``dis".
Square transformation is applied to the predictor ``ptratio".
All other predictors are kept untransformed.

To test the impact of dimensional reduction by SIR and OSIR on the predictive modeling,
we split the data into a training set of 200 observations and a test set of 306 observations,
applied SIR and OSIR on the training set to implement the dimension reduction,
then k-nearest neighbor (kNN) regression is applied to predict the response on the test set.
In the experiment, we choose $H=20.$  We repeat the experiment 100 times.
The dimensionality estimated by modified BIC varies between 2 and 4 due to randomness of the training set.
To avoid loss information and for fair comparison, we fixed $K=4$
instead of estimating it using the modified BIC criterion in this experiment.
The mean squared prediction error and standard deviation is reported in Table \ref{tab:house}.
For comparison purpose we also reported the errors by multiple linear regression (MLR) and kNN regression
before dimension reduction. The results implies that both SIR and OSIR is effective to find the relevant directions
for prediction and OSIR outperforms SIR.

We next investigate at the correlations between the estimated edr directions and the response variable,
 which have also shown in Table \ref{tab:house}. Clearly the first edr directions estimated by OSIR
 has higher correlations than SIR, indicating its better ability to accurately estimate the relevant predictive direction.
To compare the accuracy of the whole edr space estimation, it is reasonable to consider
the weighted average of the correlations of all edr directions, with the weights 
being their corresponding eigenvalues, because eigenvalues measure the importance
of the corresponding edr directions. The results in Table \ref{tab:house} show that OSIR finds
edr space more accurate than SIR. 
OSIR achieves optimal results with $L$ around $\frac {H}{2}=10$
in terms of both predictive accuracy and weight average correlations.

\begin{table}[ht]
\begin{center}
  \begin{tabular}{lcccccc}
    \hline
    \hline
    & & \multicolumn{5}{c}{Correlation to Response} \\ \cline{3-7}
    \multicolumn{1}{c}{Algorithm} & MSE & $\hat\beta_1$ &  $\hat\beta_2$  & $\hat\beta_3$ & $\hat\beta_4$ & Weighted Average \\
    \hline
    SIR&  21.66(0.28) & 0.8290&    0.1560&    0.0929&    0.0940&    0.2933\\
    OSIR$_1$ &19.97(0.28) & 0.8344&    0.1558&    0.0996&    0.0881&    0.3108\\
    OSIR$_2$ &19.84(0.28) &0.8358&    0.1490&    0.0984&    0.0877&    0.3207\\
    OSIR$_3$ &19.83(0.27) &0.8366&    0.1437&    0.1017&    0.0917&    0.3290\\
    OSIR$_5$ &19.71(0.26) &0.8373&    0.1346&    0.1118&    0.0952&    0.3419\\
    OSIR$_{10}$ &19.52(0.25) &0.8387&   0.1224&    0.1144&    0.1030&    0.3564\\
    OSIR$_{15}$ &19.40(0.24) &0.8413&   0.1185&    0.1058&    0.1004&    0.3363\\
    OSIR$_{19}$ &19.42(0.25) &0.8418&   0.1179&    0.1034&    0.1008&    0.3052\\ \hline
    MLR & 21.21(0.30) & & & & & \\
    kNN & 53.53(0.59) & & & & & \\ \hline
\end{tabular}
\caption{Experiment results for Boston housing price data.
\label{tab:house}}
\end{center}
\end{table}

\section{Conclusions and Discussions}
\label{sec:conclusion}

We developed an adjacent slice overlapping technique and applied it to the sliced
inverse regression method. This leads to a new dimension reduction approach
called overlapping sliced inverse regression (OSIR). This new approach is showed to
improve the dimension reduction accuracy by coding the higher order difference (or derivative) information
of the inverse regression curve.
The root-$n$ consistency provides theoretical guarantee for its application.

In this paper we have adopted a modified BIC criterion for
the dimensionality determination for OSIR method.
Several alternative strategies have been proposed
for dimensionality determination for SIR method such
as the $\chi^2$ test \cite{Li1991, bura2001extending, bai2004chi}
and bootstrapping \cite{Barriosa2007}.
We expect these strategies also apply to OSIR 
and would leave it a future research topic for an optimal strategy.

Finally we remark that the purpose of OSIR is to improve the dimension reduction accuracy
in the situation SIR works but does not give optimal estimation. It does not overcome the
degeneracy problem of SIR. Instead, it inherited this problem from SIR. In fact,
all inverse regression based method including SIR, OSIR and CUME face this problem
when  $S_{\bx|y}$ degenerates. To overcome this problem, some other approaches should be used.
An interesting future research topic is to see whether overlapping technique can apply to
other slicing based dimension reduction methods such as sliced average variance estimation \cite{CookWeisberg1991}
and the sliced average third moment estimation \cite{yin2003estimating}
to improve the estimation accuracy as well as overcome the degeneracy phenomenon simultaneously.

\begin{appendices}

\section{Proof of Proposition \ref{Grelation}}
\label{proof:OSIR1}

We adopt the notations $s_0=s_{H+1} = \emptyset$, $\hat p_0=\hat p_{H+1} =0$,  and $\hat\bm_0=\hat\bm_{H+1}=\mathbf 0$.
This allows us to simplify the representations of the matrices of interest.

\begin{proof}
Without loss of generality, we can assume $\bar{\bx} = 0$. Then
$$\hat \Gamma_H = \sum_{h=1}^H \hat p_h \hat \bm_h \hat\bm_h^\top$$
and
$$\hat \Gamma_H^{(1)} = \sum_{h=0}^H \hat p_{h:(h+1)} \hat \bm_{h:(h+1)} \hat\bm_{h:(h+1)}^\top.$$
By $\hat p_{h:(h+1)} = \frac 12 (\hat p_h+\hat p_{h+1})$ and
$$\hat\bm_{h:(h+1)} = \frac{\hat p_h \hat\bm_h + \hat p_{h+1} \hat\bm_{h+1}}{\hat p_h +\hat p_{h+1}},$$
we have
$$\begin{array}{rcl}
& & 2 \hat p_{h:(h+1)} \hat\bm_{(h:(h+1)} \hat\bm_{(h:(h+1)} ^\top \\[1em]
& = & \dfrac 1 {\hat p_h +\hat p_{h+1}}
\Big( \hat p_h^2 \hat \bm_h \hat \bm_h^\top + p_{h+1}^2 \hat\bm_{h+1}\hat\bm_{h+1}^\top
+ \hat p_h\hat p_{h+1} \bm_h \hat\bm_{h+1}^\top   +  \hat p_h\hat p_{h+1} \bm_{h+1} \hat\bm_h^\top  \Big) \\[1em]
& = & \dfrac 1 {\hat p_h +\hat p_{h+1}}
\bigg\{ \hat p_h (\hat p_h + \hat p_{h+1}) \hat \bm_h \hat \bm_h^\top + p_{h+1} ((\hat p_h + \hat p_{h+1} ) \hat\bm_{h+1}\hat\bm_{h+1}^\top
\\[1em]
& & \qquad
- \hat p_h\hat p_{h+1} \Big( \hat \bm_h \hat \bm_h^\top -\bm_h \hat\bm_{h+1}^\top
-   \bm_{h+1} \hat\bm_h^\top +\hat\bm_{h+1}\hat\bm_{h+1}^\top  \Big)\bigg\} \\[1em]
& = &\Big( \hat p_h  \hat \bm_h \hat \bm_h^\top + p_{h+1} \hat\bm_{h+1}\hat\bm_{h+1}^\top \Big)
- \dfrac { \hat p_h\hat p_{h+1} }{\hat p_h +\hat p_{h+1}}  \Big(\hat\bm_{h+1} -\hat \bm_h \Big) \Big(\hat\bm_{h+1} -\hat \bm_h \Big)^\top.
\end{array}$$
Therefore,
\begin{align*}
2\hat \Gamma_H^{(1)} & = \dsum_{h=0}^H \Big( \hat p_h  \hat \bm_h \hat \bm_h^\top + p_{h+1} \hat\bm_{h+1}\hat\bm_{h+1}^\top \Big)  \\
& \qquad- \dsum_{h=0}^H \dfrac { \hat p_h\hat p_{h+1} }{\hat p_h +\hat p_{h+1}}  \Big(\hat\bm_{h+1} -\hat \bm_h \Big) \Big(\hat\bm_{h+1} -\hat
\bm_h \Big)^\top \\
& = 2 \dsum_{h=1}^ H  \hat p_h  \hat \bm_h \hat \bm_h^\top  -  \dsum_{h=1}^{H-1} \dfrac { \hat p_h\hat p_{h+1} }{\hat p_h +\hat p_{h+1}}
\Big(\hat\bm_{h+1} -\hat \bm_h \Big) \Big(\hat\bm_{h+1} -\hat \bm_h \Big)^\top \\
& = 2 \hat\Gamma _H -2 \hat D_{H}^{(1)}.
\end{align*}
This finishes the proof.
\end{proof}

\section{Proof of the $\sqrt{n}$ consistency}
\label{proof:consistency}

The following lemma was well known and a detailed proof can be found  in \cite{Yu2016distance}.
\begin{lemma} \label{Sigma}
Assume that $\bx$ has finite fourth moments. Let
$$S(\bx) = (\bx-\bmu)(\bx-\bmu)^\top -\Sigma.$$ Then
$$\hat\Sigma - \Sigma = \frac 1 n \sum_{i=1}^n S(\bx_i) + o_P\left(\frac 1 {\sqrt n}\right).$$
\end{lemma}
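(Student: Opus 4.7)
The plan is to reduce the claim to a standard mean–centering identity for the sample covariance, after which the remainder is $O_P(1/n)$ by the central limit theorem for $\bar\bx$.

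First I would rewrite $\hat\Sigma$ with the population mean $\bmu$ inserted into each factor: for every $i$,
$$(\bx_i - \bar\bx)(\bx_i-\bar\bx)^\top = (\bx_i-\bmu)(\bx_i-\bmu)^\top - (\bx_i-\bmu)(\bar\bx-\bmu)^\top - (\bar\bx-\bmu)(\bx_i-\bmu)^\top + (\bar\bx-\bmu)(\bar\bx-\bmu)^\top.$$
Averaging over $i$ and using $\frac{1}{n}\sum_{i=1}^n (\bx_i-\bmu) = \bar\bx-\bmu$, the two cross terms each contribute $-(\bar\bx-\bmu)(\bar\bx-\bmu)^\top$, and the last term averages to itself, so the net result is the well-known identity
$$\hat\Sigma = \frac 1 n \sum_{i=1}^n (\bx_i-\bmu)(\bx_i-\bmu)^\top - (\bar\bx-\bmu)(\bar\bx-\bmu)^\top.$$

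Subtracting $\Sigma$ from both sides and recognising $\frac 1 n\sum_i\left[(\bx_i-\bmu)(\bx_i-\bmu)^\top-\Sigma\right]=\frac 1 n \sum_i S(\bx_i)$ yields
$$\hat\Sigma -\Sigma = \frac 1 n\sum_{i=1}^ n S(\bx_i) - (\bar\bx-\bmu)(\bar\bx-\bmu)^\top.$$
It then suffices to show that $(\bar\bx-\bmu)(\bar\bx-\bmu)^\top = o_P(n^{-1/2})$. Since $\bx$ has finite second moments (a consequence of the assumed finite fourth moments), the multivariate CLT gives $\sqrt n (\bar\bx-\bmu) = O_P(1)$, so each entry of $(\bar\bx-\bmu)(\bar\bx-\bmu)^\top$ is $O_P(n^{-1})$, which is indeed $o_P(n^{-1/2})$.

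There is essentially no obstacle: the proof is purely algebraic followed by a one-line CLT bound. The only role of the finite fourth moment assumption is to guarantee that the summands $S(\bx_i)$ are square-integrable random matrices, so that the leading term $\frac 1 n\sum_i S(\bx_i)$ is itself of order $O_P(n^{-1/2})$ and the expansion is nondegenerate; this is needed in the subsequent applications but not for the identity displayed in the lemma.
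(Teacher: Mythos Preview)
Your argument is correct and is exactly the standard derivation: the algebraic identity $\hat\Sigma = \tfrac1n\sum_i(\bx_i-\bmu)(\bx_i-\bmu)^\top - (\bar\bx-\bmu)(\bar\bx-\bmu)^\top$ followed by $(\bar\bx-\bmu)(\bar\bx-\bmu)^\top = O_P(n^{-1})=o_P(n^{-1/2})$. The paper does not actually give its own proof of this lemma; it states that the result is well known and refers the reader to \cite{Yu2016distance}, where the same expansion is carried out.
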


\begin{lemma} \label{Gamma1}
There exists a matrix-valued random variable $R(\bx, y)$ such that
$$\hat \Gamma_H^{(1)} - \Gamma_H^{(1)}  = \frac 1 n \dsum_{i=1}^n R(\bx_i, y_i) + o_P\left(\frac 1 {\sqrt n}\right).$$
\end{lemma}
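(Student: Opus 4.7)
The plan is to reduce the Bahadur-type expansion of $\hat\Gamma_H^{(1)}$ to a collection of standard delta-method linearizations of sample averages, using Proposition \ref{Grelation} to decompose the estimator into the classical SIR matrix $\hat\Gamma_H$ plus a smooth function of slice probabilities and slice means. The fundamental building blocks are
$$\hat p_h = \tfrac{1}{n}\sum_{i=1}^n \b1\{y_i\in s_h\}, \quad \hat p_h \hat\bm_h = \tfrac{1}{n}\sum_{i=1}^n \b1\{y_i\in s_h\}\bx_i, \quad \bar\bx = \tfrac{1}{n}\sum_{i=1}^n\bx_i,$$
each of which, under the finite fourth-moment assumption used in Lemma \ref{Sigma}, equals its mean plus an $O_P(1/\sqrt n)$ average of centered functions of $(\bx,y)$.

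First I would apply Proposition \ref{Grelation} to write $\hat\Gamma_H^{(1)} = \hat\Gamma_H - \hat D_H^{(1)}$, where
$$\hat D_H^{(1)} = \tfrac{1}{2}\sum_{h=1}^{H-1}\tfrac{\hat p_h \hat p_{h+1}}{\hat p_h+\hat p_{h+1}}(\hat\bm_{h+1}-\hat\bm_h)(\hat\bm_{h+1}-\hat\bm_h)^\top,$$
and the analogous identity $\Gamma_H^{(1)} = \Gamma_H - D_H^{(1)}$ holds at the population level. For the SIR piece $\hat\Gamma_H - \Gamma_H$, the linearization is a well-known consequence of writing $\hat\bm_h = (\hat p_h \hat\bm_h)/\hat p_h$ and Taylor-expanding the ratio around $(p_h, p_h\bm_h)$; this yields a sample mean of indicator-weighted functions of $(\bx,y)$ plus an $o_P(1/\sqrt n)$ remainder, which I would reproduce explicitly as a lemma so that the appendix is self-contained.

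Second, for $\hat D_H^{(1)} - D_H^{(1)}$ I would apply the delta method termwise. Each summand has the form $\omega(\hat p_h,\hat p_{h+1},\hat\bm_h,\hat\bm_{h+1})$ with $\omega(a,b,u,v) = \tfrac{ab}{a+b}(v-u)(v-u)^\top$, a $C^2$ map in a neighborhood of the true values since $p_h,p_{h+1}>0$. A first-order Taylor expansion produces a linear combination of $\hat p_h-p_h$, $\hat p_{h+1}-p_{h+1}$, $\hat\bm_h-\bm_h$, $\hat\bm_{h+1}-\bm_{h+1}$ together with a remainder of order $O_P(1/n) = o_P(1/\sqrt n)$. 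Each of the four deviations is itself a sample mean of centered functions of $(\bx,y)$: for $\hat p_h-p_h$ directly, and for the slice means via the ratio expansion $\hat\bm_h-\bm_h = \tfrac{1}{p_h}(\hat p_h\hat\bm_h - p_h\bm_h) - \tfrac{\bm_h}{p_h}(\hat p_h-p_h) + o_P(1/\sqrt n)$. Collecting terms over $h$ gives $\hat D_H^{(1)} - D_H^{(1)} = \tfrac{1}{n}\sum_i R_1(\bx_i,y_i) + o_P(1/\sqrt n)$, and subtracting from the SIR piece produces the claim with $R = R_0 - R_1$.

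The main obstacle is bookkeeping rather than any deep analytical difficulty: I must verify that every quadratic remainder arising in the two Taylor expansions is genuinely $o_P(1/\sqrt n)$, which requires uniform control on the denominators $\hat p_h$ and $\hat p_h+\hat p_{h+1}$. This follows from $p_h>0$, the law of large numbers (so each $\hat p_h$ is eventually bounded below with probability tending to one), and the fact that $O_P(1/\sqrt n)\cdot O_P(1/\sqrt n) = O_P(1/n)$. A secondary issue is that $\bar\bx$ enters implicitly through centering and must itself be linearized by a Lemma \ref{Sigma}-type argument, contributing additional $\bx_i - \bmu$ terms to the final $R(\bx_i,y_i)$; writing $R$ in closed form is cumbersome but conceptually mechanical once these pieces are in place.
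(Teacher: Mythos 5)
Your proposal is correct, but it takes a different route from the paper's own proof. You route everything through Proposition \ref{Grelation}: write $\hat\Gamma_H^{(1)}=\hat\Gamma_H-\hat D_H^{(1)}$ (and its population analogue), linearize the classical SIR piece, and then apply a termwise delta method to the slice-level map $\omega(a,b,u,v)=\tfrac{ab}{a+b}(v-u)(v-u)^\top$, which forces you to Taylor-expand the nonlinear weights $\hat p_h\hat p_{h+1}/(\hat p_h+\hat p_{h+1})$ and to control those random denominators. The paper instead never invokes Proposition \ref{Grelation} here: it works directly at the bundle level, using that $\hat p_{h:(h+1)}=\tfrac{1}{2n}\sum_i\b1_{h:(h+1)}(y_i)$ and $\hat p_{h:(h+1)}\hat\bm_{h:(h+1)}=\tfrac1n\sum_i\bx_i\b1_{h:(h+1)}(y_i)$ are themselves exact i.i.d.\ averages, writes $\hat\Gamma_H^{(1)}=\sum_{h=0}^{H}\hat p_{h:(h+1)}\hat\bm_{h:(h+1)}\hat\bm_{h:(h+1)}^\top-\bar\bx\bar\bx^\top$, and linearizes only one ratio per bundle plus the product $\hat p\,\hat\bm\,\hat\bm^\top$ and the centering term $\bar\bx\bar\bx^\top-\bmu\bmu^\top$; the resulting influence function $R=\sum_h U_h+(\bx-\bmu)\bmu^\top+\bmu(\bx-\bmu)^\top$ comes out in closed form with less bookkeeping. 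Both arguments are standard linearizations and yield the same conclusion; yours has the mild extra burden of justifying the population identity $\Gamma_H^{(1)}=\Gamma_H-D_H^{(1)}$ and the uniform lower bounds on $\hat p_h+\hat p_{h+1}$ (which you correctly flag), and it buys as a by-product an explicit asymptotic expansion of $\hat D_H^{(1)}$ itself, which connects nicely to the derivative interpretation in Section \ref{sec:difference}; the paper's direct bundle-level argument is shorter and extends verbatim to OSIR$_L$ since the unified representation of $\hat\Gamma_H^{(L)}$ has exactly the same form. (Minor point: finite second moments of $\bx$ suffice for this lemma; the fourth-moment condition is only needed for the $\hat\Sigma$ expansion in Lemma \ref{Sigma}.)
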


\begin{proof}
Note that $$\hat p_{h:(h+1)} = \frac 1 {2n} \sum_{i=1}^n \b1_{h:(h+1)}(y_i)$$ and $p_{h:(h+1)} = \tfrac 1 2 \bE[\b1_{h:(h+1)}(y)].$ So
$$\hat p _{h:(h+1)} - p_{h:(h+1)}  = \frac 1 {2n} \sum_{i=1}^n \Big( \b1_{h:(h+1)}(y_i) - p_{h:(h+1)}\Big) = O_P\left(\frac 1
{\sqrt{n}}\right)$$
and
$$\frac 1 {\hat p _{h:(h+1)} }- \frac 1 { p_{h:(h+1)}}  = \frac 1 {2n p_{h:(h+1)} ^2}
\sum_{i=1}^n ( \b1_{h:(h+1)}(y_i) - p_{h:(h+1)}) + o_P\left(\frac 1 {\sqrt n}\right) = O_P\left(\frac 1 {\sqrt{n}}\right).$$
It is not difficult to check that
$p_{h:(h+1)} \bm_{h:(h+1} =  \bE[\bx \b1_{h:(h+1)}(y)]$  and
$$\hat p_{h:(h+1)} \hat \bm_{h:(h+1)} = \frac 1 n \sum_{i=1}^n \bx_i \b1_{h:(h+1)} (y_i).$$
So
$$\hat p_{h:(h+1)} \hat \bm_{h:(h+1)} - p_{h:(h+1)} \bm_{h:(h+1)}  =
\frac 1 n \sum_{i=1}^n \left(\bx_i \b1_{h:(h+1)} (y_i) - p_{h:(h+1)}\bm_{h:(h+1)}\right) = O_P\left(\frac 1 {\sqrt n}\right).$$
and
$$\begin{array}{rcl}
\hat \bm_{h:(h+1)} - \bm_{h:(h+1)} & = & \dfrac {\hat p_{h:(h+1)} \hat \bm_{h:(h+1)} }{\hat p_{h:(h+1)} } -
\dfrac { p_{h:(h+1)} \bm_{h:(h+1)} }{p_{h:(h+1)}} \\[1em]
& =  & \dfrac {1}{\hat p_{h:(h+1)}} \Big( \hat p_{h:(h+1)} \hat \bm_{h:(h+1)} - p_{h:(h+1)} \bm_{h:(h+1)}  \Big) \\[1em]
  & & \quad +  p_{h:(h+1)} \bm_{h:(h+1)}  \left( \dfrac 1 {\hat p _{h:(h+1)} }- \dfrac 1 { p_{h:(h+1)}}  \right) \\[1em]
  & = & \dfrac {1}{p_{h:(h+1)}}\Big( \hat p_{h:(h+1)} \hat \bm_{h:(h+1)} - p_{h:(h+1)} \bm_{h:(h+1)}  \Big) \\[1em]
  &  & \quad +  p_{h:(h+1)} \bm_{h:(h+1)}  \left( \dfrac 1 {\hat p _{h:(h+1)} }- \dfrac 1 { p_{h:(h+1)}}  \right)  + o_P\left(\dfrac 1 {\sqrt
  n}\right)\\[1em]
  & =  & \dfrac 1 n \dsum_{i=1}^n U_{h,1}(\bx_i, y_i) +  o_P\left(\frac 1 {\sqrt n}\right) \\[1em]
  & = & O_P\left(\dfrac 1 {\sqrt n}\right),
\end{array}$$
where $$U_1(\bx_i, y_i) = \frac {\bx_i \b1_{h:(h+1)} (y_i) }{p_{h:(h+1)}} - \bm_{h:(h+1)} +
\bm_{h:(h+1)} \left(\frac  {\b1_{h:(h+1)}(y_i)}{ p_{h:(h+1)} }- 1\right). $$
Therefore,
$$\begin{array}{rcl}
& & \hat p_{h:(h+1)} \hat \bm_{h:(h+1)}  \hat \bm_{h:(h+1)} ^\top - p_{h:(h+1)}  \bm_{h:(h+1)}   \bm_{h:(h+1)} ^\top \\[1em]
& = & \left(\hat p_{h:(h+1)} \hat \bm_{h:(h+1)}  - p_{h:(h+1)}  \bm_{h:(h+1)}  \right)  \hat \bm_{h:(h+1)} ^\top \\[1em]
& &\quad +  p_{h:(h+1)} \bm_{h:(h+1)} \left( \hat \bm_{h:(h+1)} ^\top -   \bm_{h:(h+1)} \right) ^\top \\[1em]
& = & \left(\hat p_{h:(h+1)} \hat \bm_{h:(h+1)}  - p_{h:(h+1)}  \bm_{h:(h+1)}  \right)  \bm_{h:(h+1)} ^\top \\[1em]
& & \quad+  p_{h:(h+1)} \bm_{h:(h+1)} \left( \hat \bm_{h:(h+1)}  -   \bm_{h:(h+1)} \right) ^\top  + o_P\left(\dfrac 1 {\sqrt n}\right)\\[1em]
  & = & \dfrac 1 n \dsum_{i=1}^n U(\bx_i, y_i) +  o_P\left(\dfrac 1 {\sqrt n}\right) \\[2em]
  & = & O_P\left(\dfrac 1 {\sqrt n}\right),
\end{array}$$
where $$U_h(\bx_i, y_i) =  \left(\bx_i \b1_{h:(h+1)} (y_i) - p_{h:(h+1)}\bm_{h:(h+1)}\right)\bm_{h:(h+1)} ^\top
+  p_{h:(h+1)} \bm_{h:(h+1)} U_{h,1}(\bx_i, y_i)^\top. $$

Note that $$\bar \bx -\bmu = \frac 1 n \sum_{i=1}^n (x_i-\mu) = O_P\left(\frac 1 {\sqrt n}\right).$$
We obtain
$$\begin{array}{rcl}
\bar \bx \bar\bx^\top - \bmu\bmu^\top
& = &(\bar\bx-\bmu) \bar\bx^\top + \bmu(\bar\bx-\bmu)^\top \\[1em]
& = & (\bar\bx-\bmu) \bmu^\top + \bmu(\bar\bx-\bmu)^\top + o_P(\tfrac 1 {\sqrt n}) \\[1em]
& = & \dfrac 1 n \dsum_{i=1}^n (\bx_i-\mu)\bmu^\top + \bmu(\bx_i-\mu)^\top + o_P(\tfrac 1 {\sqrt n}) .
\end{array}$$

By simple calculation we have
$$\hat \Gamma_H^{(1)} = \sum_{h=0}^H \hat p_{h:(h+1)} \hat \bm_{h:(h+1)}  \hat \bm_{h:(h+1)} ^\top - \bar\bx \bar\bx^\top$$
and
$$\Gamma_H^{(1)} = \sum_{h=0}^H  p_{h:(h+1)}  \bm_{h:(h+1)}  \bm_{h:(h+1)} ^\top - \bmu \bmu^\top$$
So
$$\begin{array}{rcl}
\hat \Gamma_H^{(1)} - \Gamma_H^{(1)} & = &\dsum_{h=0}^H
\left(  \hat p_{h:(h+1)} \hat \bm_{h:(h+1)}  \hat \bm_{h:(h+1)} ^\top - p_{h:(h+1)}  \bm_{h:(h+1)}   \bm_{h:(h+1)} ^\top\right) \\[1em]
& & \quad + \left( \bar \bx \bar\bx^\top - \bmu\bmu^\top\right) \\[1em]
& = & \frac 1 n \dsum_{i=1}^n R(\bx_i, y_i) + o_P(\frac 1 {\sqrt n})
\end{array}$$
with $$ R(\bx_i, y_i)  = \sum_{h=0}^H U_h(\bx_i, y_i) + (\bx_i-\mu)\bmu^\top + \bmu(\bx_i-\mu)^\top.$$
This finishes the proof.
 \end{proof}

\bigskip

 \begin{proof}[Proof of Theorem \ref{error}.]
 By perturbation theory and standard argument (see e.g. \cite{Yu2016distance}), we can obtain
$$\hat \lambda_k = \lambda_k + \beta_k^\top \left\{ (\hat \Gamma_H^{(1)}- \Gamma_H^{(1)}) + \lambda_k (\hat \Sigma -\Sigma) \right\} \beta_k$$
and
$$\hat\beta_k = \beta_k - \frac { \beta_k \beta_k^\top (\hat\Sigma-\Sigma)\beta_k}{2} - \sum_{j\not=k}
\frac{\beta_j \beta_j^\top  \left\{ (\hat \Gamma_H^{(1)}- \Gamma_H^{(1)}) + \lambda_K (\hat \Sigma -\Sigma) \right\} \beta_k}{\lambda_j
-\lambda_k}.$$
By using Lemma \ref{Sigma} and Lemma \ref{Gamma1} we obtain the desired estimation with
$$\xi_k(\bx, y) = \beta_k^\top \left\{ U (\bx, y) + \lambda_k S(\bx, y) \right\} \beta_k$$
and
$$\Upsilon_k(\bx, y) = - \frac { \beta_k \beta_k^\top S(\bx, y)\beta_k}{2} - \sum_{j\not=k}
\frac{\beta_j \beta_j^\top  \left\{ U(\bx, y) + \lambda_KS(\bx, y) \right\} \beta_k}{\lambda_j -\lambda_k}.$$
 \end{proof}

\section{Proof of Proposition \ref{Grelation2}}
\label{proof:OSIR2}

We again adopt the null slice notations 
$s_{-1} = s_0=s_{H+1} = s_{H+2}= \emptyset$, 
$\hat p_{-1} = \hat p_0=\hat p_{H+1} = \hat p_{H+2} =0$,  and $\hat\bm_{-1}
= \hat\bm_0=\hat\bm_{H+1}=\hat\bm_{H+2}=\mathbf 0$
to simplify the representations.

\begin{proof}
Without loss of generality, we can assume $\bar{\bx} = 0$. Then
$$\hat \Gamma_H = \sum_{h=1}^H \hat p_h \hat \bm_h \hat\bm_h^\top$$
and
$$\hat \Gamma_H^{(2)} = \sum_{h=-1}^H \hat p_{h:(h+2)} \hat \bm_{h:(h+2)} \hat\bm_{h:(h+2)}^\top.$$
By $\hat p_{h:(h+2)} = \frac 1 3 (\hat p_h+\hat p_{h+1}+\hat p_{h+2})$ and
$$\hat\bm_{h:(h+1)} = \frac{\hat p_h \hat\bm_h + \hat p_{h+1} \hat\bm_{h+1} + \hat p_{h+2}\hat\bm_{h+2}}{\hat p_h +\hat p_{h+1}+\hat
p_{h+2}},$$
we have
$$\begin{array}{rcl}
& & 3 \hat p_{h:(h+2)} \hat\bm_{(h:(h+2)} \hat\bm_{(h:(h+2)} ^\top \\[1em]
& = & \dfrac 1 {\hat p_h +\hat p_{h+1} +\hat p_{h+2}}
\Big( \hat p_h^2 \hat \bm_h \hat \bm_h^\top + p_{h+1}^2 \hat\bm_{h+1}\hat\bm_{h+1}^\top  + p_{h+2}^2 \hat\bm_{h+2}\hat\bm_{h+2}^\top \\[1em]
& & \qquad\qquad\qquad\qquad \quad
  + \, \hat p_h\hat p_{h+1} \bm_h \hat\bm_{h+1}^\top  +  \hat p_h\hat p_{h+1} \bm_{h+1} \hat\bm_h^\top\\[1em]
& & \qquad\qquad\qquad\qquad\quad
+ \, \hat p_{h+1}\hat p_{h+2} \bm_{h+1} \hat\bm_{h+2}^\top
+ \hat p_{h+1}\hat p_{h+2} \bm_{h+2} \hat\bm_{h+1}^\top \\[1em]
& & \qquad\qquad\qquad\qquad \quad
 + \, \hat p_h\hat p_{h+2} \bm_h \hat\bm_{h+2}^\top   +   \hat p_h\hat p_{h+2} \bm_{h+2} \hat\bm_h^\top \Big) \\[1em]
& = & \dfrac 1 {\hat p_h +\hat p_{h+1}+\hat p_{h+2}}
\bigg\{ \hat p_h (\hat p_h + \hat p_{h+1}+ \hat p_{h+2}) \hat \bm_h \hat \bm_h^\top + p_{h+1} ( \hat p_h + \hat p_{h+1}
  +\hat p_{h+2} )\hat\bm_{h+1}\hat\bm_{h+1}^\top \\[1em]
& & \qquad\qquad\qquad\qquad\quad
+p_{h+2} ((\hat p_h + \hat p_{h+1}+\hat p_{h+2} ) \hat\bm_{h+2}\hat\bm_{h+2}^\top \\[1em]
& & \qquad\qquad \qquad\qquad\quad
-\, \hat p_h\hat p_{h+1} \Big( \hat \bm_h \hat \bm_h^\top -\bm_h \hat\bm_{h+1}^\top
-  \bm_{h+1} \hat\bm_h^\top +\hat\bm_{h+1}\hat\bm_{h+1}^\top\Big) \\[1em]
& & \qquad\qquad \qquad\qquad\quad
- \, \hat p_{h+1}\hat p_{h+2} \Big( \hat \bm_{h+1} \hat \bm_{h+1}^\top -\bm_{h+1} \hat\bm_{h+2}^\top
-  \bm_{h+2} \hat\bm_{h+1}^\top +\hat\bm_{h+2}\hat\bm_{h+2}^\top \Big)\\[1em]
& & \qquad\qquad \qquad\qquad\quad
- \, \hat p_h\hat p_{h+2} \Big( \hat \bm_h \hat \bm_h^\top -\bm_h \hat\bm_{h+2}^\top
-  \bm_{h+2} \hat\bm_h^\top +\hat\bm_{h+2}\hat\bm_{h+2}^\top  \Big)\bigg\} \\[1em]
& = & \Big( \hat p_h  \hat \bm_h \hat \bm_h^\top + p_{h+1} \hat\bm_{h+1}\hat\bm_{h+1}^\top
+ p_{h+2} \hat\bm_{h+2}\hat\bm_{h+2}^\top \Big) \\[1em]
& & \qquad
- \dfrac { \hat p_h\hat p_{h+1} }{\hat p_h +\hat p_{h+1}+\hat\bm_{h+2}}
\Big(\hat\bm_{h+1} -\hat \bm_h \Big) \Big(\hat\bm_{h+1} -\hat \bm_h \Big)^\top \\[2em]
& & \qquad - \dfrac { \hat p_{h+1}\hat p_{h+2} }{+\hat p_{h}+\hat p_{h+1} +\hat p_{h+2}}
\Big(\hat\bm_{h+2} -\hat \bm_{h+1} \Big) \Big(\hat\bm_{h+2} -\hat \bm_{h+1} \Big)^\top\\[2em]
& & \qquad - \dfrac { \hat p_{h}\hat p_{h+2} }{+\hat p_{h}+\hat p_{h+1} +\hat p_{h+2}}
\Big(\hat\bm_{h+2} -\hat \bm_{h} \Big) \Big(\hat\bm_{h+2} -\hat \bm_{h} \Big)^\top.
\end{array}$$
By the fact that
$$\begin{array}{rcl}
 && \Big(\hat\bm_{h+2} -\hat \bm_{h} \Big) \Big(\hat\bm_{h+2} -\hat \bm_{h} \Big)^\top \\[1em]
 &= &  \Big( (\hat\bm_{h+2} -\hat\bm_{h+1})+ (\hat\bm_{h+1} -\hat \bm_{h}) \Big)
   \Big((\hat\bm_{h+2} -\hat\bm_{h+1})+ (\hat\bm_{h+1} -\hat \bm_{h}) \Big)^\top \\[1em]
 & = & 2\Big(\hat\bm_{h+2} -\hat \bm_{h+1} \Big) \Big(\hat\bm_{h+2} -\hat \bm_{h+1} \Big)^\top
 +2\Big(\hat\bm_{h+1} -\hat \bm_{h} \Big) \Big(\hat\bm_{h+1} -\hat \bm_{h} \Big)^\top \\[1em]
 & & \qquad - \Big(\hat\bm_{h+2} -2\hat \bm_{h+1} + \hat \bm_{h} \Big) \Big(\hat\bm_{h+2} -2\hat \bm_{h+1} + \hat \bm_{h}  \Big)^\top
\end{array} $$
Therefore,
\begin{align*}
3\hat \Gamma_H^{(1)} & = \dsum_{h=-1}^H \Big( \hat p_h  \hat \bm_h \hat \bm_h^\top + p_{h+1} \hat\bm_{h+1}\hat\bm_{h+1}^\top + p_{h+2}
\hat\bm_{h+2}\hat\bm_{h+2}^\top \Big)  \\
& \qquad- \dsum_{h=-1}^H \dfrac { \hat p_h\hat p_{h+1} +2\hat p_h\hat p_{h+2}}{\hat p_h +\hat p_{h+1} + \hat p_{h+2}}  \Big(\hat\bm_{h+1} -\hat
\bm_h \Big) \Big(\hat\bm_{h+1} -\hat \bm_h \Big)^\top \\
& \qquad- \dsum_{h=-1}^H \dfrac { \hat p_{h+1}\hat p_{h+2} +2\hat p_h\hat p_{h+2}}{\hat p_h +\hat p_{h+1} + \hat p_{h+2}}  \Big(\hat\bm_{h+2}
-\hat \bm_{h+1} \Big) \Big(\hat\bm_{h+2} -\hat \bm_{h+1} \Big)^\top \\
& \qquad+ \dsum_{h=-1}^H \dfrac {\hat p_h\hat p_{h+2}}{\hat p_h +\hat p_{h+1} + \hat p_{h+2}}  \Big(\hat\bm_{h+2} -2\hat \bm_{h+1}+\hat\bm_{h}
\Big) \Big(\hat\bm_{h+2} -2\hat \bm_{h+1}+\hat\bm_{h} \Big)^\top \\[0.5em]
& = 3 \hat\Gamma _H - \tilde D_{H}^{(1)} + \tilde D_{H}^{(2)}.
\end{align*}
This finishes the proof.
\end{proof}

\end{appendices}

\bibliographystyle{abbrv}
\bibliography{osirrefs}

\end{document}